\def\star{\ensuremath{^{*}}}
\def\RRT*{RRT\star{}}
\def\RRdT*{RRdT\star{}}
\def\PRM*{PRM\star{}}
\newcommand{\overrightsmallarrow}{\mathpalette{\overarrowsmall@\rightarrowfill@}}
\newcommand{\overarrowsmall@}[3]{%
  \vbox{%
    \ialign{%
      ##\crcr
      #1{\smaller@style{#2}}\crcr
      \noalign{\nointerlineskip}%
      $\m@th\hfil#2#3\hfil$\crcr
    }%
  }%
}
\def\smaller@style#1{%
  \ifx#1\displaystyle\scriptstyle\else
    \ifx#1\textstyle\scriptstyle\else
      \scriptscriptstyle
    \fi
  \fi
}
\newcommand*\Path[1]{\sigma_{\overrightsmallarrow{#1}}}
\newcommand*\closureset[1]{\text{cl}(#1)}
\newcommand*\bLookOut[1]{%
    \ensuremath{\beta}\textsc{\footnotesize{-LookOut}}%
    \ifthenelse{\isempty{#1}}%
    {}{(#1)}%
}
\crefname{assumption}{assumption}{assumptions}
\crefname{problem}{problem}{problems}
\crefname{algorithm}{Alg.}{Algs.}
\Crefname{algorithm}{Algorithm}{Algorithms}
\crefname{figure}{Fig.}{Figs.} 
\renewrobustcmd*{\bibinitdelim}{\,} 
\Crefname{figure}{Fig.}{Figs.}
\title{\LARGE \bf
Balancing Global Exploration and Local-connectivity Exploitation with Rapidly-exploring Random disjointed-Trees
}
\author{Tin Lai$^{*1}$, Fabio Ramos$^{1,2}$ and Gilad Francis$^{1}$
\thanks{
$^{*}${\tt\small tin.lai@sydney.edu.au}
$^{1}$School of Computer Science, The University of Sydney, Australia. 
$^{2}$NVIDIA, USA. 
}%
}
\begin{document}

\maketitle
\thispagestyle{empty}
\pagestyle{empty}

\begin{abstract}

Sampling efficiency in a highly constrained environment has long been a major challenge for sampling-based planners.
In this work, we propose Rapidly-exploring Random disjointed-Trees\star{} (\RRdT*), an incremental optimal multi-query planner.
\RRdT* uses multiple disjointed-trees to exploit local-connectivity of spaces via Markov Chain random sampling, which utilises neighbourhood information derived from previous successful and failed samples.
To balance local exploitation, \RRdT* actively explore unseen global spaces when local-connectivity exploitation is unsuccessful.
The active trade-off between local exploitation and global exploration is formulated as a multi-armed bandit problem.
We argue that the active balancing of global exploration and local exploitation is the key to improving sample efficient in sampling-based motion planners.
We provide rigorous proofs of completeness and optimal convergence for this novel approach.
Furthermore, we demonstrate experimentally the effectiveness of \RRdT*'s locally exploring trees in granting improved visibility for planning.
Consequently, \RRdT* outperforms existing state-of-the-art incremental planners, especially in highly constrained environments.

\end{abstract}

\section{Introduction}

\emph{Sampling-based path planners} (SBPs) provide a robust approach to robotic motion planning, where the objective is to produce a sequence of actions for the system to transits from an initial point to a goal point under a set of constraints.
These planners are favourable as they offer robustness in high-dimensional configuration spaces (\emph{C-space}), since
the sampling procedure replaces explicit construction of the---often intractable---\emph{C-space}.
An SBP samples configuration points randomly and connects valid points in a graph or tree-like structure.
This structure is then searched for a possible solution, and given sufficient time the planner is guaranteed to find a solution if one exists.
This guarantee is often called  \emph{probabilistic completeness} \autocite{elbanhawi2014_SampRobo}.

An SBP can be categorically classified as either single-query or multi-query.
A single-query planner such as Rapidly-exploring Random Tree (RRT) \autocite{lavalle1998_RapiRand} returns a feasible path that connects a pair of initial and goal points.
Multi-query planners such as Probabilistic Roadmap (PRM) \autocite{kavraki1996_ProbRoad}, on the other hand, constructs a topological graph that allows the planner to perform different instances of queries efficiently.
Star variants of these planners, such as RRT\star{} \autocite{karaman2010_IncrSamp} and \PRM* \autocite{karaman2011_SampAlgo}, are considered \emph{asymptotically optimal} \autocite{elbanhawi2014_SampRobo}, as
they further guarantee that the solution will converge, in the limit, to the optimal solution, given a user-defined path associated cost.

Although SBPs are probabilistically complete, their runtime performance is significantly influenced by \emph{C-space}'s complexity.
Intuitively, tightly constrained regions are problematic as they limit the connectivity of free space.
Indeed, this is a widely recognised issue of SBPs, which is often framed as planning in narrow passages \autocite{hsu1998_FindNarr}.
Limited free space within narrow passages implies a low probability of picking a point inside.
Narrow passages severely restrict the performance of single-query algorithms since low-occurring samples that do happen to fall inside narrow passages are merely thrown away if the tree fails to expand.
Meaning, the growth of the tree is restricted by the surrounding obstacles, resulting in limited visibility and high failure rate in extending new connections.
Consequently, the growth of tree will be bottlenecked by narrow passages until a series of successful points within narrowed passages had been sampled.

In this paper, we propose Rapidly-exploring Random disjointed-Trees\star{} (\RRdT*) as a novel anytime SBP \autocite{karaman2011_AnytMoti} based on \RRT*, which inherit \RRT*'s \emph{probabilistic completeness} and \emph{asymptotic optimality} guarantees.
\RRdT* explores \emph{C-space} with multiple locally exploring disjointed trees (d-trees).
The initial roots of the d-trees are scattered randomly in \emph{C-space}, which is followed by expansion of the d-trees, exploiting local connectivity.
 This approach overcomes standard SBPs' failure in creating connections in highly restricted space, as \RRdT* will create a new d-tree at the sampled point if it fails to connect to an existing d-trees.
Consequently, the \RRdT* path planner can be re-framed as a multi-armed bandit (MAB), which balances global exploration and local-connectivity exploitation of \emph{C-space}.

Our contribution is an incremental SBP that exploits local-connectivity to improve performance while balancing exploration.
This is achieved by formulating the balance of the competing choices as an MAB with infinite mortal arms and non-stationary reward sequence.
We provide rigorous proofs on completeness and optimal convergence guarantees.
We show in simulations that such formulation yields superior results with high sampling efficiency in highly constrained \emph{C-space}, tackling the limited visibility issue that many SBPs faces.

\section{Related Work}\label{sec:related-work}

Previous research in SBPs have been focuses on
\begin{enumerate*}[label=\textit{(\roman*)}]
\item runtime for finding an initial solution, and
\item convergence rate of obtaining an optimal solution.
\end{enumerate*}
Improvements to convergence rates are achieved by focusing sampling to certain regions in \emph{C-space}, e.g., restricting search space to a prolate hypersphere
\autocite{gammell2018_InfoSamp},
or adaptively bias toward regions with limited visibility \autocite{yershova2005_DynaRRTs}.
Several SBP algorithms use bridge tests to discover narrow passages \autocite{wilmarth1999_MAPRProb}, followed by dense re-sampling at those regions \autocite{hsu2003_BridTest,sun2005_NarrPass,wang2010_TripRRTs}.
Another approach is retraction-based planners that optimise to generates samples close to the boundary of obstacles \autocite{zhang2008_EffiRetr,lee2012_SRRRSele}.
However, all methods require a user-defined heuristic for finding narrow spaces that naturally has no explicit representation.

Planners using multiple exploring trees were presented previously too, for example, growing bidirectional trees
\autocite{kuffner2000_RRTcEffi},
growing multiple local trees
with a heuristic
tree selection scheme \autocite{strandberg2004_AugmRRTp},
and combining bridge test with a learning technique to model the probability of tree selection \autocite{wang2018_LearMult}.
While the concept of utilising multiple trees had improved sampling efficiency, these algorithms are restrictive in tree locations or require explicit computations for finding narrow passages, which are hard to generalise to high-dimensional \emph{C-space}.

Markov Chain Monte Carlo (MCMC) has also been used by SBPs, as it utilises information observed from previous samples, instead of merely discarding failed attempts \autocite{chen2015_MotiPlan}. \citeauthor*{al-bluwi2012_MotiPlan-1} showed that a PRM is in fact the result of a set of MCMC explorations being run simultaneously \autocite{al-bluwi2012_MotiPlan-1}. A Monte Carlo Random Walk path planner was proposed in \autocite{nakhost2012_ResoPlan}, where a stochastic planner explores the neighbourhood of a search state by constructing a Markov Chain to propose spaces with high contributions.

Several authors proposed SBPs that balance between exploration and exploitation behaviours during planning, borrowing from standard practice in reinforcement learning.
Heuristic biasing was used to guide tree search greedily while retaining probability for random exploration \autocite{urmson2003_ApprHeur}. However, this method does not incorporate valuable information from invalid samples.
\citeauthor*{rickert2008_BalaExpl} \autocite{rickert2008_BalaExpl} proposed a 2-step SBP that exploits \emph{C-space}'s structure after an initial workspace exploration step.
While this approach can significantly improve performance by improving tree visibility, it lacks theoretical \emph{completeness} guarantee.

Our method, \RRdT*, actively balances explorative and exploitative behaviour to achieve high sampling efficiency.
As an incremental multi-query SBP it uses multiple trees to exploit local spaces via chained sampling at unvisited spaces.
Balancing the two objectives leads to enhanced performance while maintaining completeness and convergence guarantees.

\section{Rapidly-exploring Random disjointed-Trees\star{}}\label{sec:main}

In this section, we formalise the path planning problem and describe the \RRdT* algorithm.

\begin{figure}[!tb]
    \centering
    \begin{subfigure}{0.8\linewidth}
        \includegraphics[width=.99\linewidth]{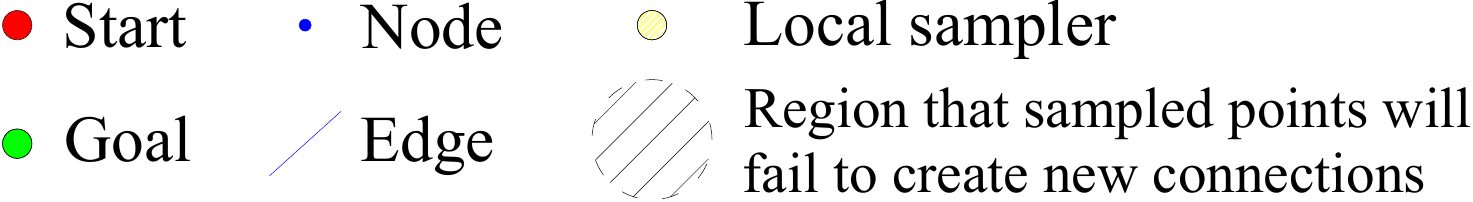}
    \end{subfigure}
    \par\vspace{0.5mm}
    \begin{subfigure}{.45\linewidth}
        \includegraphics[width=.99\linewidth]{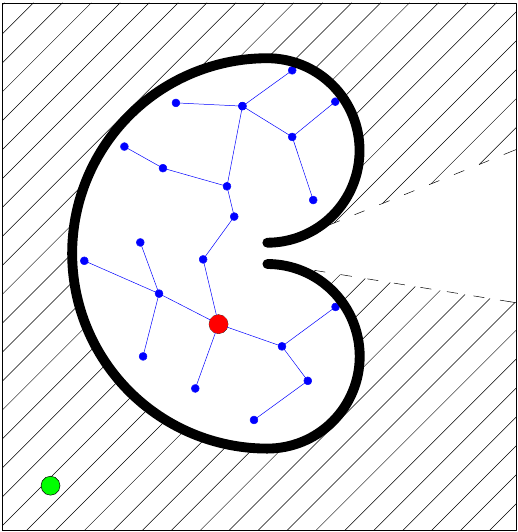}
        \caption{RRT \label{fig:bug-trap-problem:rrt}}
    \end{subfigure}%
    \begin{subfigure}{.45\linewidth}
        \includegraphics[width=.99\linewidth]{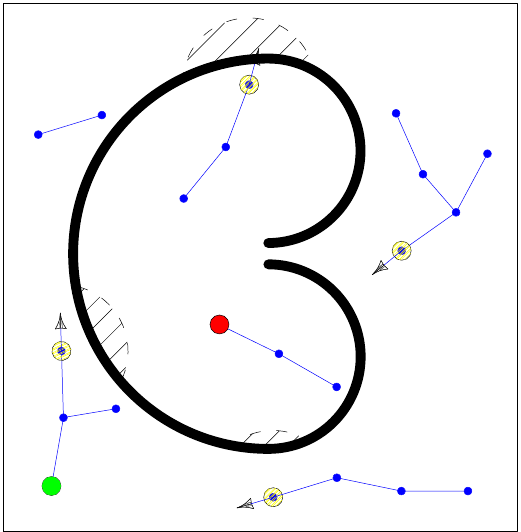}
        \caption{RRdT \label{fig:bug-trap-problem:rrdt}}
    \end{subfigure}
    \caption{
        Sampling scenarios in the bug-trap problem \autocite{yershova2005_DynaRRTs} with same number of nodes.
        (\subref{fig:bug-trap-problem:rrt}) RRT:
        Expansions of tree being restricted by surrounding obstacles, has very limited visibility (white regions) to utilise sampled points.
        (\subref{fig:bug-trap-problem:rrdt}) RRdT:
        Large visible region with high probability of creating new connections.
        Local samplers can exploit local-connectivity in parallel with less failures.
        \label{fig:bug-trap-problem}
    }
\end{figure}

\subsection{Problem formulation}

Let $C\subseteq\mathbb{R}^d$ denotes the \emph{C-space}, where $d\ge 2$; and
$C_{obs} \subseteq C$ denotes the set of invalid states.
The obstacle-free space $C_{free}$ is the closure set $\closureset{C \setminus C_{obs}}$.
Let $q\in C$ denotes a state in \emph{C-space}.
A feasible path is formally defined as follows:

\begin{definition}[Feasible path] \label{def:feasible-path}
A path is a sequence of consecutively connected configurations $\sigma:[0,1]\to C_{free}$.
A path is said to be feasible if $\sigma(0)=q_{init}$, $\sigma(1)=q_{goal}$, and it is collision free: $\sigma(\tau)\in C_{free} \forall \tau\in[0,1]$.
\end{definition}

The objective of a path planning algorithm is to construct a feasible path $\sigma$ from an initial point $q_{init}$ to a goal point $q_{goal}$. Therefore, path planning can be considered as one of the following problems:
\begin{problem}[Feasibility planning] \label{problem:feasibility}
Given a configuration space $C$,
a free space $C_{free}$,
an obstacle space $C_{obs}$,
an initial configuration $q_{init} \in C_{free}$,
and a goal configuration $q_{goal} \in C_{free}$.
Find a path that satisfy \cref{def:feasible-path}.
\end{problem}

\begin{problem}[Optimal planning] \label{problem:optimality}
Let there be a \emph{cost function} $c(\sigma)$ that assigns non-negative cost to all non-trivial feasible paths defined by \cref{def:feasible-path}.
For all possible paths from \cref{problem:feasibility}, find $\sigma^*:[0,1] \to \closureset{C_{free}}$, such that $\sigma^*(0) = q_{init}$, $\sigma^*(t) = q_{goal}$, and $c(\sigma^*) = \min_{ \sigma \in \closureset{C_{free}}} c(\sigma)$.
\end{problem}

\subsection{High-level description}

\RRdT* aims to strike a balance between global exploration and local exploitation with effective utilisation of information obtained from sampled points.
RRT%
    \footnote{Although \RRdT* is based on \RRT*, the concept is interchangeable with RRT, which will be used for ease of illustration.} tree expansion is limited to a local scope bounded by the neighbourhood visibility of the tree nodes as illustrated in \cref{fig:bug-trap-problem:rrt}.
It continuously tries to grow the root tree outwards by sampling random points and creating new connections to the closest nodes.
Therefore, RRT will often reject updates from valid samples when there exists no free route from the closest existing node towards a sampled point.
For example, in \cref{fig:bug-trap-problem:rrt}, any sampled point that falls inside the hatched region will be discarded due to failure in tree expansion.
This problem is exacerbated in a tightly constrained \emph{C-space}, where there are not many readily available free routes.
\RRdT* maintains high visibility by using d-trees to explore \emph{C-space}.
It exploits local connectivity by employing local samplers (yellow circle in \cref{fig:bug-trap-problem:rrdt}) that serve as density estimators.
D-trees explore within $C_{free}$ by estimating which portions of its surrounding have the highest probability of being free space.
Thus, maintaining local connectivity information without wasting samples.
As illustrated in \cref{fig:bug-trap-problem:rrdt}, \RRdT* mitigates the limited visibility by creating new local samplers in regions, which are not visible from the root tree.
Local sampling in \RRdT* is performed by moving an $\epsilon$-distance step drawn from the local sampler.
By employing local sampling, \RRdT* will only fail to create new connections at non-visible locations within the $\epsilon$-ball, which has a smaller volume compared to the entire \emph{C-space}.
Sampling locally take advantages of the local-connectivity in narrow passages while avoiding the needs to use heuristic measurement to identify narrow passages \autocite{wang2018_LearMult}.

\RRdT* can be used for re-planning, and it returns a feasible solution faster than general multi-query planners such as PRM.
PRM performs shortest path search only after exhausting its sampling budget.
In contrast, \RRdT* is incremental and maintains an exploring tree from a root that is continuously adding new samples.
The balance between exploration and exploitation fuses the advantageous properties from the two main SBPs---tree connectivity from RRT and high visibility from PRM.
Effective balancing can ensure faster convergence rate while maintaining asymptotic \emph{completeness}.
The proposal of \RRdT* aims to fill in this gap with provable guarantees.

\subsection{Balancing exploration and exploitation}

Exploiting local connectivity comes naturally in path planning, where the objective is to construct a directed path that connects two given points.
Let $\sigma_{sol}$ denotes a potential solution path,
$\Path{q_i q_j}$ denotes a path with $\sigma(0)=q_i,\sigma(1)=q_j$,
and $\mathcal{V}(q)$ denotes the visibility set of $q$, which represents the region of $C_{free}$ visible from some $q \in C_{free}$.
Intuitively, each point $q_i \in \sigma_{sol}$ is connected to at least two other points (parent and child node) that lie on the same solution.
That is, each point on the solution path has at least two exploitable locally connected nodes;
only exceptions are $q_{init}$ and $q_{goal}$, which does not has a parent and child node respectively.
\begin{equation}
\begin{aligned}
    \exists^{\ge 2}q_j, q_j\in \mathcal{V}(q_i) \land q_i \ne q_j \land \Path{q_i q_j} \in C_{free} \\
    \forall q_i \in \sigma_{sol}\setminus \{q_{init},q_{goal}\}
\end{aligned}
\end{equation}

This statement of at least two local connections is a lower bound that remains true for all path planning settings.
Higher visibility implies more available local connections to exploit.
However, most SBPs do not utilise---or exploit---this fact to avoid the pitfall of being stuck in dead-ends;
for example,
random sampling
by PRM ignores valuable local connectivity information.
On the other hand, heuristic-based algorithms such as potential field method \autocite{khatib1990_RealObst} tries to exploit obstacle information, but often will not find a feasible path even if one exists \autocite{koren1991_PoteFiel}.
The proposed \RRdT* strikes a balance between exploiting the locality of \emph{C-space} and avoid being trapped within obstacles by actively balancing the exploration of unvisited space, as we describe below:

\begin{figure}[tb]
    \centering
    \begin{tikzpicture}
        \node (img) {\includegraphics[width=.8\linewidth]{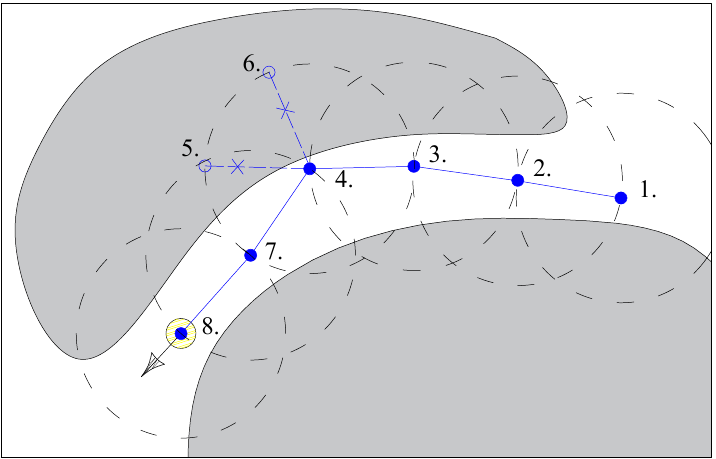}};
        \node at (-2.5,0.6){$C_{obs}$};
        \node at (1.75,-1.5){$C_{obs}$};
    \end{tikzpicture}%
    \caption{
        Local connectivity exploitation using MCMC random walk inside narrow passage.
        With a local sampler first initialised at location 1, 2-4 are successive and successful samples.
        The 5\textsuperscript{th} sample fails the collision check.
        Local sampler then retries to re-sample sampler's proposal distribution (6\textsuperscript{th} sample) and fails again.
        The 7\textsuperscript{th} sample finds a free route to continue the chained sampling.
        If local sampler get stuck during the process (e.g. at a dead-end), MAB scheduler will restart arm at a new location.
        \label{fig:rrdt-local-sampler-example}
    }
\end{figure}

\subsubsection{Exploitation on local-connectivity}

Local exploitation in \RRdT* is performed by a \emph{local sampler}---an adaptive density estimator that defines a proposal distribution based on previous successes, which is utilised by an MCMC random walker.
This attempts to exploit the local connectivity between each point as a chained sampling procedure built from the previous movement's direction.
\Cref{fig:rrdt-local-sampler-example} demonstrates a local sampler navigating within a narrow passage, utilising previous successful
observations to adapt its proposal distribution.

\subsubsection{Exploration on global-space}

Exploration is implicitly performed by restarting a local sampler at a new location.
Global exploration in \RRdT* utilise generic uniformly random sampling of \emph{C-space}, which inherits \emph{probabilistic completeness} and asymptotic guarantees of optimal convergence from \RRT*.
\RRdT* as a framework does not use a global dynamic sampling schedule.
Replacing uniform schedule with a dynamic schedule, such as the one used in Informed \RRT* \autocite{gammell2018_InfoSamp}, is trivial, but is beyond the scope of this paper.

\subsubsection{Balancing scheduler}\label{sec:mab-scheduler}

Balancing global exploration and local exploitation in \RRdT* can be formulated as an MAB.
The objective of an MAB scheduler is to allocate resources effectively based on the complexity of \emph{C-space}.
Spaces with a complex environment can utilise local-connectivity information to exploits obstacles' structure;
whilst spaces with high visibility can allocate more resources on rapid exploring.

We employ Mortal MAB \autocite{chakrabarti2009_MortMult} with non-stationary reward sequences \autocite{besbes2014_StocMult} and infinitely many arms \autocite{berry1997_BandProb} as \RRdT*'s MAB scheduler.
An arm $a_{i}$ represents a local sampler that moves within $C_{free}$;
its state at time $t$ is denoted as $a_{i,t}$ where it will samples $C$ to makes an observation $o_t$, and receives a reward $R(a_{i,t})$.
We consider discrete time and Bernoulli arms, with success probability as
\begin{equation}
    \mathbb{P}(a_{i,t} \mid a_{i,t-1}, o_{t-1})\,\forall_{i\in\{1,...,k\}}. \label{eq:arm-probability}
\end{equation}
We consider the arms as independent random variables with $\mathbb{P}(a_{i,t})$ having non-stationary distribution (as each arm represents a moving density estimator of connections in $C_{free}$), which depends on its previous successes.
The reward sequence of $a_i$ is a stochastic sequence with unknown payoff distributions and changes rapidly according to the complexity of \emph{C-space}.
The reward $R(a_{i,t})$ is a mapping of \emph{C-space}, where $a_{i,t}$ is mapped to a pay-off of 1 if $a_{i} \in C_{free}$ at $t$, and 0 otherwise.
An arm is `mortal' in a sense that it is used to model a local sampler that moves around in \emph{C-space};
thus it has a stochastic lifetime after which they expire due to events such as running into a dead-end or stuck in a narrow passage.
The action space is infinite ($k=\infty$) because a new arm can be created on demand by randomly sampling a new point, especially after an arm is expired.

\subsection{Implementation}

\begin{algorithm}[tb]
    \caption{MAB arm restart scheduler} \label{alg:mab-restart}
    \Fn{$\FuncSty{RestartArm}(A,D,\eta)$}{
        \ForEach{arm $a_i\in A$ with $\mathbb{P}(a_i) < \eta$}{
            $q_{rand} \gets \FnSampleFree$\;
            Join $q_{rand}$ to all $d_i \in D$ within $\epsilon$-distance\; \label{alg:mab-restart:epsilon-joins-1}
            \If{Not joined to existing tree}{ \label{alg:mab-restart:epsilon-joins-2}
                Create new arm $a_{new}$ at $q_{rand}$ \; \label{alg:mab-restart:epsilon-joins-3}
                Replace $a_i$ in $A$ with $a_{new}$ \; \label{alg:mab-restart:epsilon-joins-4}
            }
            \Return{True} \Comment*[r]{new node is added}
        }
        \Return{False} \Comment*[r]{No sampler restarted}
    }
\end{algorithm}

\Cref{alg:mab-restart} shows \RRdT*'s MAB restart scheduler.
The scheduler checks whether the probability of each arm $a_i$ in the set of arms $A$ is lower than $\eta$, an arbitrarily small positive constant. A low probability indicates successive failures (e.g. stuck in a dead-end), which results in the arm randomly relocated to a new location in $C_{free}$. The new location, $q_{rand}$,  is assigned by global exploration sampling schedule, \FuncSty{SampleFree}.
However, if the new sampled point, $q_{rand}$, is within $\epsilon$-distance of an existing d-tree $d_i$ from the set of trees $D$, the new point will be added to the appropriate d-tree instead
    (\cref{alg:mab-restart:epsilon-joins-1,alg:mab-restart:epsilon-joins-2,alg:mab-restart:epsilon-joins-3,alg:mab-restart:epsilon-joins-4}).
The resulting behaviour is identical to \RRT* (extending existing node $\epsilon$-distance towards sampled point), which implies \RRdT*'s asymptotic behaviour converges to \RRT* after \emph{C-space} is covered by d-trees.

\Cref{alg:rrdt} defines the main \RRdT* algorithm. \RRdT* exploration starts by placing $k$ arms randomly in $C_{free}$.
Then, it draws an arm from a multinomial distribution in \FuncSty{PickArm}, with each arm $a_i \in A$ having probability formulated by \cref{eq:arm-probability}.
Once an arm had been picked, \RRdT* exploits the arm's local-connectivity by sampling from the local proposal distribution (\cref{alg:rrdt:local-sampling}).
To incorporate past sampling successes, we employed an MCMC random walker with a von Mises-Fisher (vMF) \autocite{fisher1993_StatAnal} distribution, which is a wrapped Gaussian distribution. 
It enables us to construct a chained directed sampling using a random walker that produces a sample $q_{new}$.
If $q_{new}$ is valid, the arm's position $a_i.pos$ is updated accordingly  (\cref{alg:rrdt:update-arm-pos}).
A schematic example of a  directed local-connectivity search of $C_{free}$ is shown in \cref{fig:rrdt-local-sampler-example}.
An illustration of local sampling at different locations simultaneously is shown in \cref{fig:map:room}.

Asymptotic optimality of \RRdT* is maintained by \FuncSty{Rewire} (\cref{alg:rrdt:rewire}), a procedure that rewires existing edges to neighbourhood edges with the least accumulated cost \autocite{karaman2010_IncrSamp}.
Global exploration in \RRdT* is implicitly performed when the MAB scheduler restarts an arm at a randomly chosen location in $C_{free}$ (\cref{alg:rrdt:restart-arm}).
Thus, exploration is performed when local exploitation is unsuccessful, which allows \RRdT* to actively adjust its immediate objective based on \emph{C-space}'s complexity.

\begin{algorithm}[tb]
    \caption{\RRdT* Algorithm} \label{alg:rrdt}
\Indmm\Indmm
    \KwIn{$q_{init},q_{goal},N,k,\epsilon,\eta$}
     \KwInit{$Root \gets G(V=\set{q_{init}}, E=\emptyset)$;
     $D \gets \set{Root}$; $A \gets \emptyset$; $n \gets 1$\;}
\Indpp\Indpp
    Initialise $k$ arms into $A$ and d-trees into $D$\;
    \While{$n \le N$}{

        \uIf(\Comment*[f]{\cref{alg:mab-restart}}){\FnRestartArm{$A,D,\eta$}}{ \label{alg:rrdt:restart-arm}
            $n \gets n + 1$\;
        }
        \Else{
            $a_{i} \gets \FnPickArm{A}$\; \label{alg:rrdt:pickarm}
            $q_{new} \gets a_{i}$ samples locally via MCMC\; \label{alg:rrdt:local-sampling}
            \If{$\Path{a_{i}.pos\;q_{new}} \in C_{free}$}{
                Join $q_{new}$ to all $d_i \in D$ within $\epsilon$-distance\; \label{alg:rrdt:join-nearby-tree}
                \lIf{$q_{new}$ joined to $Root$}{\FnRewire{Root}} \label{alg:rrdt:rewire}
                Update $a_{i}.pos$ to $q_{new}$\; \label{alg:rrdt:update-arm-pos}
                $n \gets n + 1$\;
            }
            Updates $a_{i}$ probability\;
        }
    }
\end{algorithm}

\section{Analysis}\label{sec:analysis}

We will first define the notion of \emph{expansiveness} \autocite{hsu1997_PathPlan}.

\begin{definition}[expansiveness] \label{def:expansive}
    Let $\mu(X)$ denotes the Lebesgue measure of a set $X$, which represents its volume.
    The free space $C_{free}$ is said to be $(\alpha,\beta,\epsilon)$-\emph{expansive} \autocite{hsu1997_PathPlan} if each of its connected components $C_{free}' \subseteq C_{free}$ satisfy the following conditions:
    \begin{enumerate}[label=(\roman*)]
        \item for every point $q \in C_{free}'$, $\mu(\mathcal{V}(q)) \ge \epsilon$
        \item for any connected subset $S \subseteq C_{free}'$,
        the set \\
        {\small
            $\bLookOut{S}=\{q \in S\;|\;\mu(\mathcal{V}(q) \setminus S)\ge \beta \mu(C_{free}'\setminus S)\}$
        }
        has volume
        $\mu(\bLookOut{S}) \ge \alpha \mu(S)$
    \end{enumerate}
\end{definition}

This notation guarantees $C_{free}$ is $\epsilon$-good \autocite{lavalle2001_RandKino};
and measures complexity of $C$ with the quantity of visibility by $q\in S$.

\subsection{Feasibility planning}

Let $n_{dtree}$ denotes the total number of d-trees restart in the time-span of the algorithm.
Then, although a new d-tree is restarted whenever a new arm is added, $n_{dtree}$ will be bounded by a real value constant.
Consider the followings:

\begin{assumption}\label{assum:c-free-finite-set}
The free space $C_{free} \subseteq C$ is a finite set.
\end{assumption}

\begin{lemma}[Termination of d-tree restarting] \label{lemma:termination-dtree-spawning}
Let \Cref{assum:c-free-finite-set} hold.
Then, the total number of d-trees restarted for any given \emph{C-space} is always finite.
That is, there exists a constant $\phi\in\mathbb{R}$ such that $n_{dtree} < \phi$ for any given $C$.
\end{lemma}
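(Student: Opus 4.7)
The plan is to prove the lemma by a packing argument that exploits the $\epsilon$-separation built explicitly into \cref{alg:mab-restart}. First I would observe that when \FnRestartArm{} spawns a new d-tree at a sampled configuration $q_{rand}$, it does so only when \cref{alg:mab-restart:epsilon-joins-2} fails, i.e.\ when $q_{rand}$ is \emph{not} within $\epsilon$-distance of any existing d-tree $d_i \in D$. In particular, $q_{rand}$ is at least $\epsilon$ away from the root of every previously created d-tree, since each root is itself a node of its own tree. By induction on the spawning order, the set $R \subseteq C_{free}$ of all d-tree roots ever created is therefore pairwise $\epsilon$-separated.

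Next I would translate this separation into a volume bound. Centering an open ball $B(r,\epsilon/2)$ of radius $\epsilon/2$ at every $r\in R$, the triangle inequality together with pairwise $\epsilon$-separation implies these balls are mutually disjoint. Under \cref{assum:c-free-finite-set} (reading ``finite set'' as the natural interpretation that $C_{free}$ has finite Lebesgue measure), the union $\bigcup_{r\in R} B(r,\epsilon/2)$ is contained in the bounded $\epsilon/2$-inflation of $C_{free}$, which also has finite measure. Disjointness yields
\begin{equation*}
    n_{dtree}\cdot \mu\bigl(B(\cdot,\epsilon/2)\bigr) \;=\; \mu\!\left(\bigcup_{r\in R} B(r,\epsilon/2)\right) \;\le\; \mu\bigl(C_{free}\oplus B(0,\epsilon/2)\bigr),
\end{equation*}
so taking $\phi \triangleq \mu\bigl(C_{free}\oplus B(0,\epsilon/2)\bigr)/\mu\bigl(B(0,\epsilon/2)\bigr) + 1$ gives a finite constant, depending only on $C$ and $\epsilon$, with $n_{dtree} < \phi$. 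The ball volume in $\mathbb{R}^d$ is the standard constant $\mu(B(0,\epsilon/2)) = \pi^{d/2}(\epsilon/2)^d/\Gamma(d/2+1)$, so $\phi$ is explicit once $d$, $\epsilon$, and $\mu(C_{free})$ are fixed.

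The only real subtlety, and what I would single out as the main obstacle, is reconciling the phrasing of \cref{assum:c-free-finite-set} with the packing argument: taken literally, ``finite set'' would trivialise the statement, so the proof must pin down the intended bounded/finite-measure reading and confirm the bound is independent of the (random) sampling history. Everything else is a direct consequence of the $\epsilon$-separation invariant maintained by \cref{alg:mab-restart:epsilon-joins-1,alg:mab-restart:epsilon-joins-2,alg:mab-restart:epsilon-joins-3,alg:mab-restart:epsilon-joins-4}, so no probabilistic argument about \FnSampleFree{} or the MAB scheduler is needed for this lemma—the bound holds pathwise.
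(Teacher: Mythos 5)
Your proposal is correct and takes essentially the same route as the paper: the paper's own proof is exactly this packing argument, observing that \cref{alg:mab-restart} forces every new d-tree root to be at least $\epsilon$ away from all existing trees and concluding that $n_{dtree}$ is bounded by the number of $\epsilon$-balls that fit in $C_{free}$. Your version is in fact more careful than the paper's sketch --- you make the pairwise separation of roots explicit, use disjoint $\epsilon/2$-balls to get a concrete constant $\phi$, and correctly flag that \cref{assum:c-free-finite-set} must be read as ``finite measure'' rather than literally as a finite set, a point the paper glosses over.
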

\begin{proof}
A new d-tree will be restarted by sampling a new point in $\mathbb{R}^d$.
From \cref{alg:mab-restart}, if there exists a d-tree within $\epsilon$-distance, the new point will be added as a node to that d-tree instead of creating a new d-tree.
Hence, every new d-tree must be at least $\epsilon$-distance away from each other.
We can formulate each d-tree as an $\epsilon$-ball centred at the d-tree's origin.
In the limiting case, $C_{free}$ will be filled by the volume of $\epsilon$-balls.
Let $\epsilon(q)$ denotes the $\epsilon$-ball of $q$, and $V$ be the set of all nodes. Then, the above statement is formally defined as:
\begin{equation}
    \lim_{t\to\infty} C_{free} \setminus \bigcup_{q \in V_t} \epsilon(q) = \emptyset
\end{equation}
Therefore, $n_{dtree}$ is upper bounded by how many $\epsilon$-balls can $C_{free}$ fits. 
It is immediate that if $C_{free}$ is a finite set, $n_{dtree}$ will always be bounded by a constant.
This tractability guarantee ensures that using d-trees to explore $C_{free}$ will always terminate.
\end{proof}

\begin{assumption}\label{assum:non-zero-discount}
The MAB scheduler has a discounting factor such that the probability of all arms will eventually decay to zero.
\end{assumption}
\begin{lemma}[Infinite random sampling] \label{lemma:dtree-inf-uniform-sampling}
Let \Cref{assum:non-zero-discount} holds, and $n_{i,t}$ denotes the number of uniformly random points added to a d-tree $d_i \in D$ at time $t$.
Then, $n_{i,t}$ always increases without bound, i.e., as $t\to\infty$, $n_{i,t} \to \infty$.
\end{lemma}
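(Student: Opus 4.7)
The plan is to chain three facts: first, that the MAB scheduler in \Cref{alg:mab-restart} invokes \FuncSty{SampleFree} infinitely often; second, that by \cref{lemma:termination-dtree-spawning} only finitely many of those uniform draws can be consumed in spawning new d-trees; and third, that for each fixed d-tree $d_i$ a Borel--Cantelli argument forces infinitely many of the remaining uniform draws to land inside the $\epsilon$-neighbourhood of $d_i$ and hence be appended to it via \cref{alg:mab-restart:epsilon-joins-1}.

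First I would establish infinite activation of \FuncSty{SampleFree}. By \cref{assum:non-zero-discount} the discount factor drives every arm's probability $\mathbb{P}(a_i)$ to zero; since $\eta>0$ is a fixed threshold, each arm must in finite time satisfy $\mathbb{P}(a_i)<\eta$, triggering a uniform draw in \Cref{alg:mab-restart}. Although a restart may relocate $a_i$ or replace it by a freshly created $a_{new}$, the discount factor continues to act on the replacement, so the same decay-and-trigger cycle recurs indefinitely. Consequently the cumulative number $N_u(t)$ of uniform draws produced by \FuncSty{SampleFree} satisfies $N_u(t) \to \infty$ as $t\to\infty$.

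Next I would couple this with \cref{lemma:termination-dtree-spawning}. That lemma furnishes a constant $\phi\in\mathbb{R}$ bounding the lifetime count of d-tree restarts, so the branch of \Cref{alg:mab-restart} that executes \cref{alg:mab-restart:epsilon-joins-3,alg:mab-restart:epsilon-joins-4} can fire at most $\phi$ times. Hence all but finitely many uniform draws fall within $\epsilon$-distance of some already-existing d-tree and are absorbed by \cref{alg:mab-restart:epsilon-joins-1}. To localise this to an individual $d_i$, fix $d_i$ and let $t_i$ denote its creation time with root $q_i$. For every $t\ge t_i$ the absorbing region
\[
    N_i(t) = C_{free} \cap \bigcup_{v \in V(d_i(t))} B_\epsilon(v)
\]
contains $B_\epsilon(q_i)\cap C_{free}$, whose Lebesgue measure is strictly positive because $q_i$ lies almost surely in the interior of $C_{free}$. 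Since \FuncSty{SampleFree} draws independently and uniformly from $C_{free}$, every draw at time $t\ge t_i$ lands in $N_i(t)$ with probability at least $p_i := \mu(B_\epsilon(q_i)\cap C_{free})/\mu(C_{free})>0$. The uniform draws being mutually independent and $\sum_{t\ge t_i} p_i = \infty$, the second Borel--Cantelli lemma implies that almost surely infinitely many draws fall in $N_i$, each one incrementing $n_{i,t}$. Therefore $n_{i,t}\to\infty$ as $t\to\infty$.

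The principal obstacle I anticipate is the first step: turning the qualitative \cref{assum:non-zero-discount} into the quantitative claim that \FuncSty{SampleFree} is invoked infinitely often, since successful restarts and successful local MCMC updates intermittently reset or boost arm probabilities. Establishing that every such boost is itself eventually erased by the discount factor needs a careful bookkeeping argument, or a cleaner reformulation of \cref{assum:non-zero-discount} as a uniform bound on the rate of decay. Once that quantitative infinite-sampling statement is in hand, the combinatorial step via \cref{lemma:termination-dtree-spawning} and the probabilistic step via Borel--Cantelli are essentially routine.
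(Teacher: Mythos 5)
Your first step coincides with the entirety of the paper's own argument: the paper deduces from \Cref{assum:non-zero-discount} that every arm's probability eventually falls below the threshold, hence the restart scheduler of \cref{alg:mab-restart} fires infinitely often and produces infinitely many uniform samples, and from there it simply asserts that $n_{i,t}$ grows without bound. Where you genuinely diverge is in the second half: the paper never explains why infinitely many of those uniform draws accrue to a \emph{fixed} tree $d_i$ rather than being spread over new trees or other existing trees, whereas you close exactly that gap by combining \cref{lemma:termination-dtree-spawning} (only finitely many draws can spawn new trees) with a second Borel--Cantelli argument showing that each draw independently lands in the positive-measure absorbing region $B_\epsilon(q_i)\cap C_{free}$ with probability bounded below, so almost surely infinitely many are appended to $d_i$ via \cref{alg:mab-restart:epsilon-joins-1}. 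This buys an almost-sure statement that the paper only gestures at, and it is the version of the lemma actually needed downstream in \Cref{thm:ktree-join-bound} and \Cref{thm:jointrees}, where each individual tree must receive unboundedly many uniform samples. Your worry about the decay assumption being eroded by intermittent probability boosts is reasonable but ultimately moot: \Cref{assum:non-zero-discount} is stated as a property of the scheduler's output (all arm probabilities eventually decay to zero), not of the raw discount mechanism, so the paper --- and you, in your first paragraph --- are entitled to use it directly. The one cosmetic caveat is that your conclusion is ``$n_{i,t}\to\infty$ almost surely,'' which is weaker than the lemma's unqualified claim but is the strongest statement that can actually be true here; the paper's deterministic phrasing is an overstatement that your proof quietly corrects.
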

\begin{proof}
\Cref{assum:non-zero-discount} ensures that the probably of all arms in $A$ will eventually decay to zero, no matter how successful they are.
Therefore, the MAB restart scheduler, as defined in \cref{alg:mab-restart}, will perform infinitely many times.
Therefore, there exist infinitely-many uniformly random samples, and $n_{i,t}$ always increases without bound.
\end{proof}

Let $n^{\RRdT*}_t$ and $n^{\RRT*}_t$ be the number of uniformly random points sampled at time $t$ for \RRdT* and \RRT* respectively.
The differences of the asymptotic sampling scheme between \RRdT* and \RRT* is at most a constant.
\begin{theorem}\label{thm:asym-same-as-rrt}
There exists a constant $\phi \in \mathbb{R}$ such that
  \begin{equation}
    \lim_{t\to\infty} \mathbb{E}\left[ \frac{n^{\RRdT*}_t}{n^{\RRT*}_t} \right] \le \phi.
  \end{equation}

\end{theorem}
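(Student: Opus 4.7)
The plan is to decompose \RRdT*'s uniform-sampling count into a bounded ``creation'' phase, during which new d-trees are spawned, and a subsequent ``extension'' phase that mirrors \RRT*, then apply \cref{lemma:termination-dtree-spawning} and \cref{lemma:dtree-inf-uniform-sampling} to control each piece. The intuition is that once every region of $C_{free}$ is covered by an $\epsilon$-ball of some existing d-tree, any subsequent uniformly sampled point from \texttt{SampleFree} is joined to a nearby d-tree by \cref{alg:mab-restart} and plays exactly the role of a single \RRT* extend step.

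Concretely, I would write $n^{\RRdT*}_t = c_t + r_t$, where $c_t$ counts uniform samples that instantiate a new d-tree (\cref{alg:mab-restart:epsilon-joins-3,alg:mab-restart:epsilon-joins-4} of \cref{alg:mab-restart}) and $r_t$ counts the remaining uniform samples, which are absorbed into an existing d-tree. \Cref{lemma:termination-dtree-spawning} immediately yields $c_t \le \phi$ uniformly in $t$, with $\phi$ the constant from that lemma. For the extension samples, each draw is a point produced by the global \texttt{SampleFree} routine that is attached to a nearby d-tree; this operation is indistinguishable from a single \RRT* iteration, so coupling the two algorithms on a shared i.i.d.\ stream of \texttt{SampleFree} draws gives $r_t \le n^{\RRT*}_t$ sample-by-sample. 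Combining the two estimates, $n^{\RRdT*}_t / n^{\RRT*}_t \le 1 + \phi / n^{\RRT*}_t$.

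Passing to the limit and invoking \cref{lemma:dtree-inf-uniform-sampling} to conclude that $n^{\RRT*}_t \to \infty$ almost surely, the second term vanishes under expectation, and the theorem follows with any constant at least $1$ (or $1+\phi$ if one wishes a single uniform bound valid for every finite $t$ as well). The main obstacle is making the coupling rigorous: the cleanest route is to build both algorithms on a common probability space driven by one sequence of \texttt{SampleFree} draws so that each extension-phase uniform sample in \RRdT* is literally an \RRT* sample, reducing the comparison to the deterministic accounting sketched above. The subtle point is verifying that the MAB scheduler's intervening local-exploitation steps do not distort the conditional distribution of subsequent uniform draws, which holds because \texttt{SampleFree} returns an independent uniform sample at every call regardless of the interleaved MCMC activity.
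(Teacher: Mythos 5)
Your proof is sound, but it takes a genuinely different route from the paper's. The paper's argument leans on \cref{lemma:dtree-inf-uniform-sampling}: it notes that $n^{\RRdT*}_t = \sum_i n_{i,t} \to \infty$, asserts that the MCMC local sampling is therefore ``dominated'' by the global uniform sampling in the limit, and concludes qualitatively that the ratio is bounded because \RRdT*'s behaviour converges to that of \RRT* once the space is covered by d-trees. It never exhibits the constant. You instead split $n^{\RRdT*}_t$ into a creation count $c_t$, bounded uniformly by the $\phi$ of \cref{lemma:termination-dtree-spawning}, and an extension count $r_t$ coupled to the \RRT* sample stream, which yields the explicit pathwise bound $1 + \phi/n^{\RRT*}_t$ and, via dominated convergence (the ratio never exceeds $1+\phi$), a limiting constant of $1$. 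Your version is the more concrete and more informative of the two: it pins down which lemma does the work (the finiteness of d-tree restarts, not the infinitude of samples) and produces an actual value for $\phi$. Two small remarks. First, the coupling is arguably more machinery than needed: since \RRdT* calls \FnSampleFree{} at most once per iteration while \RRT* samples uniformly exactly once per iteration (and at least once per added node under the node-count reading of $t$), the inequality $r_t \le n^{\RRT*}_t$ follows from direct accounting without constructing a common probability space. Second, you cite \cref{lemma:dtree-inf-uniform-sampling} to conclude $n^{\RRT*}_t \to \infty$, but that lemma concerns \RRdT*'s d-trees; the divergence of $n^{\RRT*}_t$ is immediate from \RRT*'s definition. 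Neither point affects correctness.
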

\begin{proof}
There exists two different sampling schemes being employed in \RRdT*.
\begin{enumerate}[label=(\roman*)]
\item \textbf{Global}: \emph{Random sampling} when an arm restarts (\cref{alg:mab-restart})\label{sampling:random}%
\item \textbf{Local}: \emph{MCMC random walk} sampling when an arm exploits local neighbourhood (\cref{alg:rrdt} \cref{alg:rrdt:local-sampling}).\label{sampling:MCMC}
\end{enumerate}
The total number of uniformly random points sampled for \RRdT* is the summation of each individual d-tree's randomly sampled points,
i.e., $n^{\RRdT*}_t = \sum_{i=1}^{|D|} n_{i,t}$.
Deriving from \Cref{lemma:dtree-inf-uniform-sampling}, it guarantees each d-tree $d_i \in D$ will always has infinite uniformly sampled points $n_{i,t}$ to improve the tree structure.
Hence, $\lim_{t\to\infty} n^{\RRdT*}_t = \infty$.
Therefore, no matter how successful \ref{sampling:MCMC} is, it will be dominated by \ref{sampling:random} in the limiting case.
Therefore, the ratio of the total number of uniformly random points between \RRdT* and \RRT* will be bounded by a constant as the behaviour of \RRdT* will converge to \RRT* as $t\to\infty$.
\end{proof}

With \Cref{lemma:dtree-inf-uniform-sampling,thm:asym-same-as-rrt}, the \emph{probabilistic completeness} of \RRdT* is immediate.
\begin{theorem}[Probabilistic Completeness] \label{thm:prob-completeness}
\RRdT* inherits the same probabilistic completeness of \RRT*.
That is, if there exists a feasible solution to \cref{problem:feasibility}, then $\lim_{i\to\infty}\mathbb{P}(\{\sigma_{sol}(i)\cap q_{goal}\ne\emptyset\})=1$
\end{theorem}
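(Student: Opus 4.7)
The plan is to leverage the three preceding results, \Cref{lemma:termination-dtree-spawning,lemma:dtree-inf-uniform-sampling,thm:asym-same-as-rrt}, to reduce the statement to the known probabilistic completeness of \RRT* due to \textcite{karaman2010_IncrSamp}. The core observation is that \Cref{thm:asym-same-as-rrt} already bounds the ratio of uniformly random samples drawn by \RRdT* and \RRT* by a constant in the limit; since \RRT*'s completeness argument depends only on having an unbounded supply of uniform samples in $C_{free}$ that can be attached to a growing tree rooted at $q_{init}$, showing that \RRdT*'s root tree eventually receives such a stream suffices.

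First, I would invoke \Cref{lemma:termination-dtree-spawning} to fix a finite upper bound $\phi$ on the total number of d-trees ever instantiated. After this finite stabilisation, every sample produced by \FnRestartArm either lies within $\epsilon$-distance of an existing d-tree, in which case \cref{alg:mab-restart:epsilon-joins-1,alg:mab-restart:epsilon-joins-2} attach it to that tree, or is rejected as a new d-tree. In parallel, \Cref{lemma:dtree-inf-uniform-sampling} guarantees that every d-tree $d_i\in D$ accrues infinitely many uniformly random samples as $t\to\infty$. Next, I would argue that all d-trees eventually merge into the root tree containing $q_{init}$: using the $(\alpha,\beta,\epsilon)$-expansiveness of $C_{free}$ from \cref{def:expansive}, the probability that a fresh uniform sample falls simultaneously within $\epsilon$ of the root tree and some other d-tree is strictly positive, and because \FnSampleFree draws independent uniform samples, a Borel--Cantelli argument establishes that each such merging event occurs almost surely in finite time.

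Once every d-tree has been absorbed into the root component with probability one, the residual process is equivalent to \RRT* initialised with a finite set of seed nodes covering $C_{free}$, driven by an unbounded stream of uniform samples whose rate is within the constant factor $\phi$ of \RRT*'s own sampling rate by \Cref{thm:asym-same-as-rrt}. The standard \RRT* completeness proof then applies verbatim to yield $\lim_{i\to\infty}\mathbb{P}(\{\sigma_{sol}(i)\cap q_{goal}\ne\emptyset\})=1$. The main obstacle I anticipate is the merging step: making the Borel--Cantelli argument rigorous requires verifying that the merging events remain sufficiently independent once conditioned on the evolving geometry of the d-trees, which should follow because \FnSampleFree is memoryless, but must be stated carefully with respect to the filtration generated by the MAB scheduler to avoid spurious dependence between successive restarts.
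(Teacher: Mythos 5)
Your overall strategy matches the paper's: the paper gives no explicit proof body for this theorem at all, stating only that completeness is ``immediate'' from \Cref{lemma:dtree-inf-uniform-sampling} and \Cref{thm:asym-same-as-rrt}, i.e.\ precisely the reduction to \RRT*'s completeness via an unbounded, asymptotically \RRT*-like stream of uniform samples that you propose. Where you genuinely diverge is in making the tree-merging step explicit and arguing it from scratch: the paper does not address merging here, and when it does (in \Cref{thm:jointrees}, stated for the optimality analysis) it routes the argument through the quantitative $k$-tree attachment bound of \Cref{thm:ktree-join-bound} borrowed from the LM-RRT paper, rather than through a direct Borel--Cantelli construction. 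Your instinct that merging is actually needed for completeness (the component containing $q_{init}$ must absorb whichever d-tree reaches $q_{goal}$) is a point the paper glosses over, so your proof is the more honest one. The one concrete weakness is your claim that ``the probability that a fresh uniform sample falls simultaneously within $\epsilon$ of the root tree and some other d-tree is strictly positive'': two d-trees need not ever lie within $2\epsilon$ of one another at any fixed time, so no single sample can bridge them; the event only acquires positive probability after the trees have grown toward each other, which requires either an inductive argument over a chain of $\epsilon$-balls covering a connecting path (the standard \RRT* coverage argument) or the expansiveness-based bound of \Cref{thm:ktree-join-bound} that the paper leans on. With that repair your route is sound and arguably more self-contained than the paper's.
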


\subsection{Asymptotic optimality planning}

\begin{theorem}[see \autocite{wang2018_LearMult}] \label{thm:ktree-join-bound}
Select $k$ points $q_1,...,q_k$ randomly from $C_{free}'$ including $q_{init}$ and $q_{goal}$. Set $q_1,...,q_k$ as root nodes and extend $k$ trees from these points.
Let $n$ be the total number of nodes that all these trees extended, and $\gamma\in\mathbb{R}$ be a real number in $(0,1]$.
If $n$ satisfies:
\begin{equation}
\begin{split}
    n \ge\, & k(\alpha\beta\epsilon)^{-1} \ln[4(1-\epsilon)]\ln\{3\ln[2k^2(1-\epsilon)]/\gamma\beta\} \\
    & + k\mu(C_{free})/\mu(C)\ln(3k^2/2\gamma)
\end{split}
\end{equation}
then the probability that each pair of these $k$ trees can be attached successfully is at least $1-\gamma$.
\end{theorem}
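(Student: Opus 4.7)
The theorem is quoted from \autocite{wang2018_LearMult}, so my plan is to reconstruct its argument, adapted to the present notation. The high-level idea is to bound the probability of pairwise attachment between any two of the $k$ trees and then union-bound over the $\binom{k}{2}<k^2$ pairs. Two trees rooted at $q_i$ and $q_j$ are successfully attachable once each tree contains a vertex lying in the visibility region $\mathcal{V}(\cdot)$ of a vertex of the other, so the failure event decomposes as ``either $T_i$ or $T_j$ fails to grow a $\beta$\textsc{-LookOut} vertex directed toward the other''.

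Step 1 (single-tree coverage). Using \cref{def:expansive}, I would invoke the Hsu coverage lemma: if a tree rooted at $q_i$ draws $m$ uniformly random samples from $C_{free}'$, then the probability that the tree's reachable set fails to contain a $\beta$\textsc{-LookOut} of the current frontier decays geometrically with rate driven by $\alpha\beta\epsilon$. Iterating the LookOut chain (whose length is bounded by $\ln[4(1-\epsilon)]$, which is the origin of that prefactor) shows that
\[
    m \ge (\alpha\beta\epsilon)^{-1}\ln[4(1-\epsilon)]\,\ln(1/\delta_1)
\]
suffices for the frontier LookOut to be covered with probability at least $1-\delta_1$.

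Step 2 (attachment between two trees). Conditioned on Step 1, each tree has grown into a reachable set whose visibility covers a constant fraction of $C_{free}'$, and attempting an attachment reduces to independent Bernoulli draws that first have to hit $C_{free}'$ from $C$ (success rate $\mu(C_{free})/\mu(C)$) and then fall into the overlapping visibility. Setting the per-tree failure budget to $\delta_1 = \gamma\beta/(3\ln[2k^2(1-\epsilon)])$ and the Bernoulli failure budget to $2\gamma/(3k^2)$, and balancing these across the two trees, produces exactly the triple-logarithm $\ln\{3\ln[2k^2(1-\epsilon)]/\gamma\beta\}$ in the first additive term and the factor $\ln(3k^2/2\gamma)$ in the second. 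Distributing the sample budget evenly across the $k$ trees introduces the leading $k$ in both additive terms.

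Step 3 (union bound). A union bound over the $\binom{k}{2}<k^2$ pairs keeps the total failure probability below $\gamma$, yielding the claim. The main obstacle is Step 2: the delicate book-keeping of how the failure budget $\gamma$ is split between the LookOut-coverage event and the ``hit $C_{free}'$ at all'' event so that the three logarithms combine into exactly the stated form. The underlying Hsu coverage lemma is itself non-trivial, requiring induction along a LookOut chain of length $\ln[4(1-\epsilon)]$, and I would invoke it as a black box rather than re-derive it; the only \RRdT*-specific check is that, by \cref{lemma:dtree-inf-uniform-sampling,thm:asym-same-as-rrt}, the per-tree sampling between restarts is asymptotically uniform on $C_{free}'$, so the hypothesis of the cited theorem is genuinely met.
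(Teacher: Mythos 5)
The paper gives no proof of \cref{thm:ktree-join-bound} at all: it imports the statement verbatim from \autocite{wang2018_LearMult} and explicitly defers the full proof to that reference, adding only the one-sentence observation that the d-trees of \RRdT* conform to the same bound. So there is no in-paper argument to compare yours against. Your closing remark --- that \cref{lemma:dtree-inf-uniform-sampling} and \cref{thm:asym-same-as-rrt} guarantee each d-tree eventually receives uniformly random samples, so the hypotheses of the cited theorem are genuinely met --- is precisely the only \RRdT*-specific content the paper itself supplies here, and it is the only verification actually needed if the theorem is treated as an external import.

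As a reconstruction of the cited proof, however, what you have written is a strategy outline rather than a proof, and you concede as much. Step 1 invokes the Hsu expansiveness coverage lemma as a black box, and Step 2 admits that the ``delicate book-keeping'' needed to make the triple logarithm $\ln\{3\ln[2k^2(1-\epsilon)]/\gamma\beta\}$, the prefactor $\ln[4(1-\epsilon)]$, and the term $k\mu(C_{free})/\mu(C)\ln(3k^2/2\gamma)$ come out in exactly the stated form is not carried out. Those two items are essentially the entire mathematical content of the theorem; what remains is the generic shape of such arguments (a per-pair attachment bound followed by a union bound over the $k(k-1)/2 < k^2$ pairs), which is plausible but does not by itself establish the specific inequality. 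If the intent is to mirror the paper --- citing the result rather than proving it --- then your final paragraph suffices and the reconstruction is unnecessary; if the intent is to actually prove the statement, Steps 1 and 2 must be filled in from \autocite{wang2018_LearMult} or from the original expansive-spaces analysis underlying \cref{def:expansive}.
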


Therefore, joining of d-trees in \RRdT* confirms to the same bound.
Due to space constrains, please refers to \autocite{wang2018_LearMult} for full proof of \Cref{thm:ktree-join-bound}.

\begin{theorem}[Joining of d-trees]\label{thm:jointrees}
Let $C_{free}' \subseteq C_{free}$ be a connected free space; $d_i,d_j \in D$ be an instance of d-tree in $C_{free}'$ where $d_i \ne d_j$.
If there exists a feasible path to connect $d_i$ to $d_j$, both trees will eventually join as $t \to \infty$.
\end{theorem}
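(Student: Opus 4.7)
The plan is to specialise \cref{thm:ktree-join-bound} to the case $k=2$ on the pair $\{d_i, d_j\}$, and then drive the failure probability to zero by appealing to \cref{lemma:dtree-inf-uniform-sampling}. Because both d-trees lie in the same connected component $C_{free}'$ and a feasible connecting path is assumed to exist, the $(\alpha,\beta,\epsilon)$-expansive structure that \cref{thm:ktree-join-bound} presupposes applies directly to the restricted pairwise joining problem, so the theorem's hypotheses transfer essentially for free once we restrict attention to $d_i$ and $d_j$.

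The execution would proceed in three steps. First, fix an arbitrary $\gamma \in (0,1]$ and let $n^{*}(\gamma)$ denote the finite node-count threshold obtained from the right-hand side of the bound in \cref{thm:ktree-join-bound} with $k=2$. Second, invoke \cref{lemma:dtree-inf-uniform-sampling} to conclude that each of the two d-trees receives infinitely many uniformly random points as $t \to \infty$, so that with probability one there is a finite random time $T(\gamma)$ at which the combined node count of $d_i$ and $d_j$ first exceeds $n^{*}(\gamma)$. Applying \cref{thm:ktree-join-bound} at $T(\gamma)$ then gives
\begin{equation*}
    \mathbb{P}\bigl(d_i \text{ and } d_j \text{ joined by time } T(\gamma)\bigr) \ge 1-\gamma.
\end{equation*}
Third, since once two d-trees are merged they remain merged for the remainder of the run, the joining event is monotone in $t$, and letting $\gamma \to 0^{+}$ yields $\mathbb{P}(d_i \text{ and } d_j \text{ eventually join}) = 1$.

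The step that will require the most care is justifying that the sample stream feeding each tree satisfies the uniform-sampling regime that \cref{thm:ktree-join-bound} is stated in. In \RRdT* the nodes arrive from two sources: global uniform \FuncSty{SampleFree} draws used on arm restarts in \cref{alg:mab-restart}, and local MCMC random walks from \cref{alg:rrdt:local-sampling}. The clean way around this is to restrict attention to the uniform sub-sequence isolated by \cref{lemma:dtree-inf-uniform-sampling}: it grows without bound in each tree and, taken on its own, is exactly the setting in which \cref{thm:ktree-join-bound} was proved. The intervening MCMC nodes only add extra vertices and candidate edges, so they can only strengthen the joining event and cannot invalidate one that would have occurred under pure uniform sampling. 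A minor secondary point worth mentioning is that the statement quantifies only over d-trees living in the same connected component, so inter-component pairs do not need to be addressed.
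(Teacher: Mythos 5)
Your proof is correct and follows essentially the same route as the paper's: both rest on applying \cref{thm:ktree-join-bound} once \cref{lemma:dtree-inf-uniform-sampling} guarantees the relevant node count eventually exceeds the threshold for any $\gamma$. If anything you are more careful than the paper---you make the monotonicity and $\gamma\to 0$ limit explicit and address the mixed uniform/MCMC sample stream---while the paper, because it applies the bound with $k=|D|$ to all trees at once rather than to the single pair, additionally invokes \cref{lemma:termination-dtree-spawning} to ensure that $k$ is finite.
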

\begin{proof}
\Cref{lemma:termination-dtree-spawning} states that the restarting scheme of d-trees will eventually terminate; hence, the number of d-trees is finite.
\Cref{lemma:dtree-inf-uniform-sampling} ensures the number of uniformly random points sampled for all $d_i \in D$ is infinite as $t\to\infty$.
Hence, in the limiting case, all d-trees satisfy the real-valued bound given by \Cref{thm:ktree-join-bound}.
Therefore, all d-trees in $C_{free}'$ will eventually join to a single tree.
\end{proof}

\begin{theorem}[Asymptotic optimality]
Let $\sigma^{\RRdT*}_t$ be the solution returned by \RRdT* at time $t$, and $c^*$ is the minimal path cost for \cref{problem:optimality}.
If a solution exists, then the cost of $\sigma^{\RRdT*}_t$ will converges to optimal cost almost-surely. That is:
  \begin{equation}
    \mathbb{P} \left( \left\{ \lim_{t\to\infty} c(\sigma^{\RRdT*}_t) = c^* \right\} \right) = 1
  \end{equation}
\end{theorem}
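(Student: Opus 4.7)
The plan is to reduce the asymptotic optimality of \RRdT* to that of \RRT*, by showing that in the limit \RRdT* behaves like an \RRT* rooted at $q_{init}$ while retaining infinite uniform random sampling. The scaffolding for this reduction is already in place: \Cref{thm:jointrees} guarantees that every d-tree in a connected component of $C_{free}$ eventually merges into a single tree, and \Cref{thm:asym-same-as-rrt} bounds the asymptotic ratio of random samples used by \RRdT* and \RRT* by a constant. Together they say that, with probability one, after finite time the graph maintained by \RRdT* contains $q_{init}$ in the same connected component as every other d-tree, and that the stream of uniform random samples feeding the \FuncSty{Rewire} routine is, up to a bounded multiplicative factor, the same stream that drives \RRT*.

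With the joining event fixed, I would argue as follows. First, invoke \Cref{thm:jointrees} for the connected component $C_{free}'$ containing $q_{init}$ and $q_{goal}$ to obtain an almost-sure time $T<\infty$ after which all d-trees in $C_{free}'$ have joined $Root$. For every $t\ge T$, each newly added node is attached to $Root$, hence on \cref{alg:rrdt:rewire} of \Cref{alg:rrdt} the \FuncSty{Rewire} procedure is executed with exactly the same semantics as in \RRT*. Second, apply \Cref{lemma:dtree-inf-uniform-sampling} to conclude that the number of uniformly random points added after time $T$ still grows without bound, and apply \Cref{thm:asym-same-as-rrt} to conclude that this growth rate is only a constant factor slower than that of \RRT*. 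The MCMC samples produced by the local samplers constitute additional valid nodes that feed the same \FuncSty{Rewire} procedure; since they lie in $C_{free}$ and are incorporated via the standard near-neighbour rewiring, they can only decrease the cost of any path maintained in the tree and therefore do not harm the optimality argument.

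At this point I would invoke the asymptotic optimality of \RRT* established by \textcite{karaman2011_SampAlgo}. Concretely, let $\sigma^{\RRT*}_s$ denote the best solution found by \RRT* after $s$ uniform samples; then $\mathbb{P}(\lim_{s\to\infty} c(\sigma^{\RRT*}_s) = c^*) = 1$. By the two ingredients above, there is a random but almost-surely finite index $s(t)$ such that the tree rooted at $q_{init}$ in \RRdT* at time $t$ contains, as a subgraph, the \RRT* tree generated by the first $s(t)$ uniform samples; by \Cref{thm:asym-same-as-rrt} and \Cref{lemma:dtree-inf-uniform-sampling}, $s(t)\to\infty$ almost surely. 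Consequently $c(\sigma^{\RRdT*}_t) \le c(\sigma^{\RRT*}_{s(t)})$ and the right-hand side converges almost surely to $c^*$, which establishes the claim.

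The main obstacle is the middle step: justifying that the near-neighbour structure and the $\gamma$-rewiring radius used internally by \FuncSty{Rewire} coincide with the radius prescribed by the \RRT* analysis even though \RRdT* injects MCMC samples from the local samplers between the uniform ones. The cleanest way to handle this is to observe that the asymptotic optimality proof of \RRT* only requires a lower bound on the number of independent uniform samples per unit volume and a properly shrinking connection radius; the extra MCMC samples contribute additional well-placed nodes but do not alter the radius schedule, so the \RRT* bound continues to apply to the subsequence indexed by the uniform samples. Making this subsequence argument precise, without re-deriving the full \RRT* proof, is the only delicate point in the proposal.
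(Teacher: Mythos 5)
Your proposal follows essentially the same route as the paper's proof: invoke \Cref{thm:jointrees} to merge all d-trees in the connected component into a single tree rooted at $q_{init}$, use \Cref{lemma:dtree-inf-uniform-sampling} (and \Cref{thm:asym-same-as-rrt}) to guarantee an unbounded supply of uniform samples feeding that tree, and then inherit optimality from the \RRT* rewiring analysis. If anything, your version is more careful than the paper's three-sentence argument, since you explicitly flag the one genuinely delicate point --- that the injected MCMC nodes inflate the node count on which the shrinking connection radius is based, so the reduction to the \RRT* guarantee needs a subsequence argument over the uniform samples --- a subtlety the paper's own proof passes over in silence.
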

\begin{proof}
From \Cref{thm:jointrees}, all d-trees in the same $C_{free}'$ will converge to a single tree.
According to \Cref{lemma:dtree-inf-uniform-sampling} there will be infinite sampling available to improve that tree;
and together with adequate rewiring procedure \autocite{karaman2010_IncrSamp} by \cref{alg:rrdt} \cref{alg:rrdt:rewire},
it is guaranteed that the solution will converge to the optimal solution as $t\to\infty$.
\end{proof}

\section{Experimental Results}\label{sec:experimental-results}

\begin{figure}
    \centering
    \begin{subfigure}{0.33\linewidth}
        \includegraphics[width=.99\linewidth]{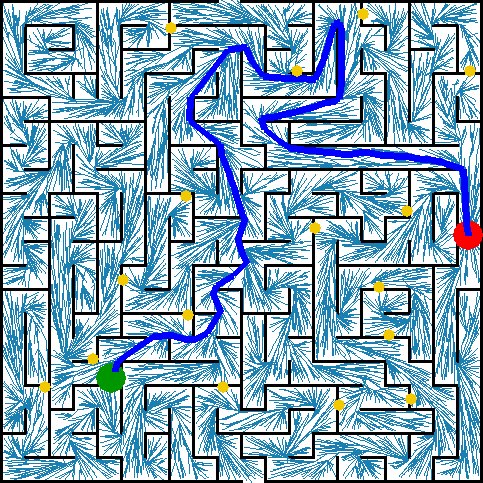}
        \caption{\RRdT* \label{fig:map:maze:rrdt}}
    \end{subfigure}%
    \begin{subfigure}{0.33\linewidth}
        \includegraphics[width=.99\linewidth]{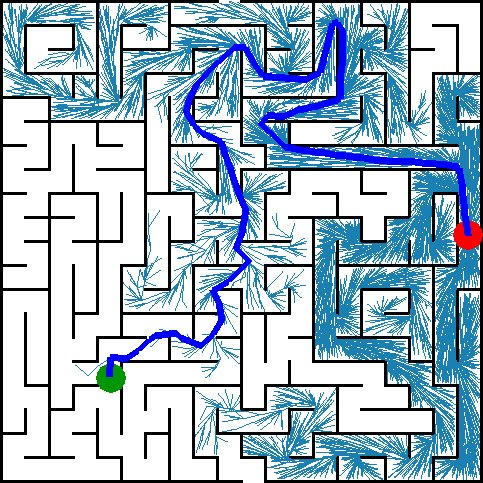}
        \caption{\RRT* \label{fig:map:maze:rrt}}
    \end{subfigure}%
    \begin{subfigure}{0.33\linewidth}
        \includegraphics[width=.99\linewidth]{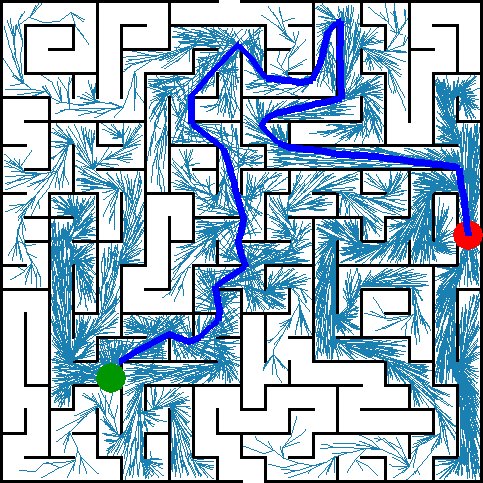}
        \caption{Bi-\RRT* \label{fig:map:maze:birrt}}
    \end{subfigure}
    \caption{Maze environment.
    Incremental SBPs with 10,000 nodes.
    (\subref{fig:map:maze:rrdt}) edges evenly distributed, and visible to all $C_{free}$.
    (\subref{fig:map:maze:rrt}) uneven edges distribution with limit visibility.
    (\subref{fig:map:maze:birrt}) same as \RRT* but edges can grow from two points.
    \label{fig:map:maze}
    }
\end{figure}

\begin{figure}[tb]
    \centering
    \begin{subfigure}{0.65\linewidth}
        \frame{
        \includegraphics[trim=35mm 40mm 35mm 30mm,clip=true,height=4cm]{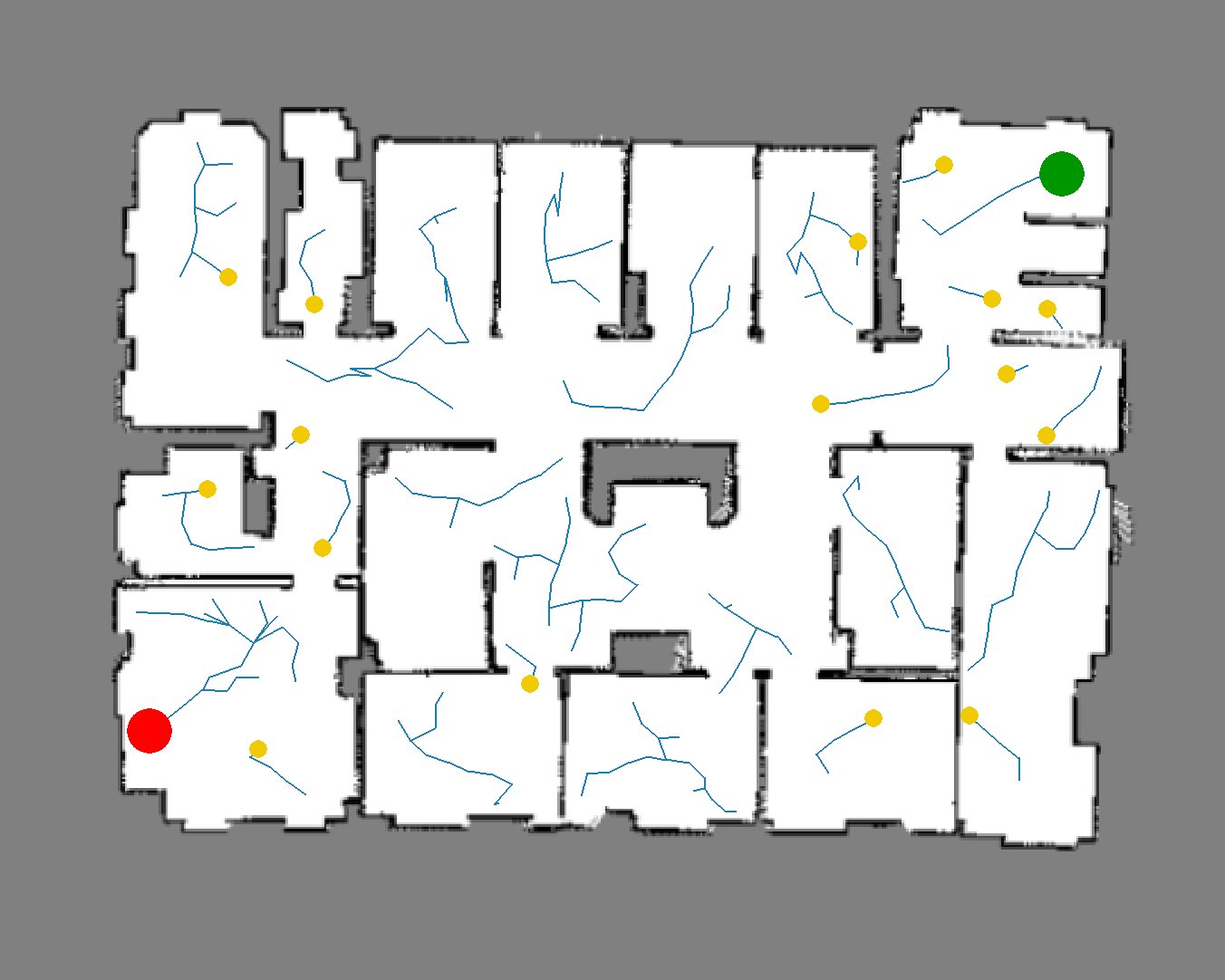}}
        \caption{Standard room floor-plan \label{fig:map:room}}
    \end{subfigure}%
    \hspace{0.5em}%
    \begin{subfigure}{0.25\linewidth}
        \frame{\includegraphics[trim={0 0.8em 0 5em },clip=true,height=4cm]{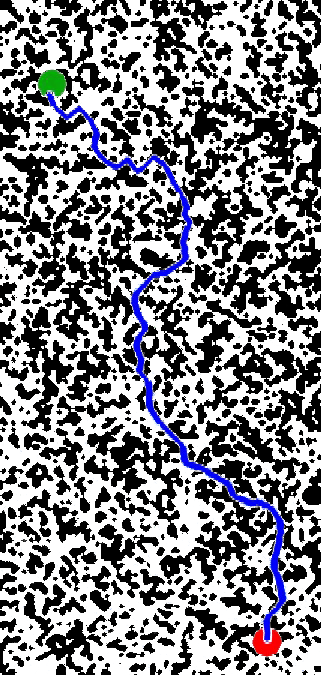}}
        \caption{Clutter \label{fig:map:clutter}}
    \end{subfigure}
    \caption{Environments.
    (\subref{fig:map:room}) Room:
    typical map for baseline (least constrained); also showcasing \RRdT* local samplers exploit multiple local-spaces simultaneously.
    (\subref{fig:map:clutter}) Clutter: randomly generated, extremely limited visibility.
    \label{fig:map}}
\end{figure}

\begin{figure}[tb]
    \centering
    \begin{subfigure}{.99\linewidth}
        \includegraphics[width=.99\linewidth]{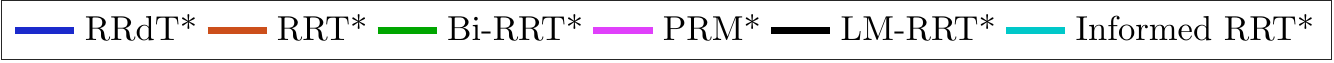}
    \end{subfigure}
    \begin{tikzpicture}
    \node (img) {
        \begin{subfigure}{.31\linewidth}
            \includegraphics[width=.99\linewidth]{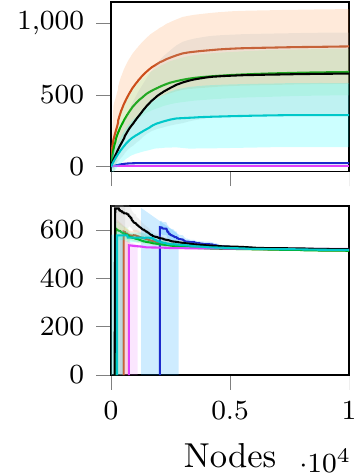}
            \caption{Room \label{fig:alg-comparasion:room}}
        \end{subfigure}%
        \begin{subfigure}{.31\linewidth}
            \includegraphics[width=.99\linewidth]{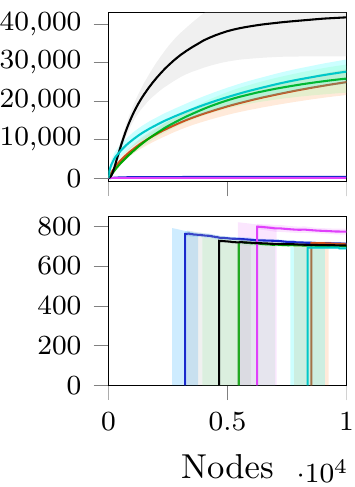}
            \caption{Maze \label{fig:alg-comparasion:maze}}
        \end{subfigure}%
        \begin{subfigure}{.31\linewidth}
            \includegraphics[width=.99\linewidth]{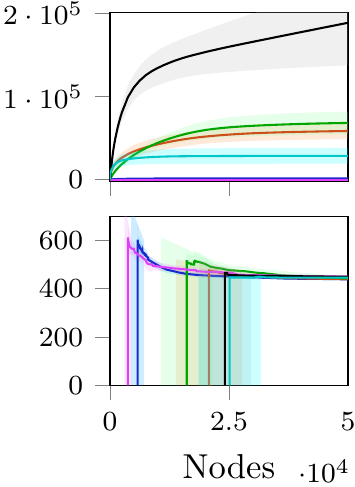}
            \caption{Clutter \label{fig:alg-comparasion:clutter}}
        \end{subfigure}
    };
    \node [rotate=90] at (-4.1,1.3){\footnotesize Failed Conn.};
    \node [rotate=90] at (-4.1,-0.29){\footnotesize Path Cost};
    \end{tikzpicture}%
    \caption{
        Comparison of failed connections (top) and path cost (bottom) as a function of tree nodes in various planning scenarios, shaded region indicates the standard deviation of the 20 repetitions.
        \RRdT* maintains a low failed connections rate in all environments, whereas \PRM* does not creates any connections while sampling.
        (\subref{fig:alg-comparasion:room}) Room:
            simple environment, \RRdT* requires more nodes to return initial solution, however, converges to the same solution.
        (\subref{fig:alg-comparasion:maze}) Maze:
            more complex, \RRdT* reached initial solution faster than other planners.
        (\subref{fig:alg-comparasion:clutter}) Clutter:
            highly complex environment, \RRdT* consistently requires similar amount of nodes and outperforms other incremental SBPs.
        \label{fig:alg-comparasion}
    }
\end{figure}

\begin{table}[tb]
\centering
\caption{Sampled points for 10,000 nodes. Failure comparison. ($\mu\pm2\sigma$) with two significant figures \label{table:sampled-pts-results}}
\begin{tabular}{ccccc}
\toprule
\multirow{2}{*}{Env.}    & \multirow{2}{*}{Method}          & \multicolumn{3}{c}{Sampled Points (\e{3})}                                                                                                \\ \cmidrule{3-5}
                         &                                  & Total         & \begin{tabular}[c]{@{}c@{}}Fail to\\ connect\end{tabular} & \begin{tabular}[c]{@{}c@{}}Points in\\ $C_{obs}$\end{tabular} \\ \toprule
\multirow{6}{*}{Room}    & \multicolumn{1}{c|}{\RRdT*}      & 21$\pm$0.32                                       & \textbf{0.023$\pm$0.010}                    & 11$\pm$0.32                             \\
                         & \multicolumn{1}{c|}{\RRT*}       & 22$\pm$0.88                                       & 0.84$\pm$0.52                               & 11$\pm$0.70                             \\
                         & \multicolumn{1}{c|}{Bi-\RRT*}    & 21$\pm$0.64                                       & 0.66$\pm$0.32                               & 11$\pm$0.54                             \\
                         & \multicolumn{1}{c|}{LM-\RRT*}    & 21$\pm$0.83                                       & 0.69$\pm$0.29                               & 11$\pm$0.59                             \\
                         & \multicolumn{1}{c|}{Informed-\RRT*} & \textbf{18$\pm$0.79}                           & 0.36$\pm$0.22                               & \textbf{7.5$\pm$0.57}                             \\
                         & \multicolumn{1}{c|}{\PRM*}       & 21$\pm$0.30                                       & N/A                                         & 11$\pm$0.30                             \\ \midrule
\multirow{6}{*}{Maze}    & \multicolumn{1}{c|}{\RRdT*}      & \textbf{12$\pm$0.11}                              & \textbf{0.29$\pm$0.052}                     & 1.7$\pm$0.10                            \\
                         & \multicolumn{1}{c|}{\RRT*}       & 40$\pm$6.8                                        & 25$\pm$6.6                                  & 5.3$\pm$1.0                             \\
                         & \multicolumn{1}{c|}{Bi-\RRT*}    & 41$\pm$6.8                                        & 26$\pm$6.8                                  & 5.3$\pm$1.0                             \\
                         & \multicolumn{1}{c|}{LM-\RRT*}    & 50$\pm$8.6                                       & 35$\pm$7.5                               & 5.3$\pm$1.1                             \\
                         & \multicolumn{1}{c|}{Informed-\RRT*} & 36$\pm$4.0                           & 22$\pm$1.7                               & 4.3$\pm$2.3                             \\
                         & \multicolumn{1}{c|}{\PRM*}       & \textbf{12$\pm$0.090}                             & N/A                                         & \textbf{1.5$\pm$0.088}                  \\ \midrule
\multirow{6}{*}{Clutter} & \multicolumn{1}{c|}{\RRdT*}      & 22$\pm$0.29                                       & \textbf{1.3$\pm$0.11}                       & 11$\pm$0.26                          \\
                         & \multicolumn{1}{c|}{\RRT*}       & 100$\pm$22                                         & 42$\pm$16                                   & 48$\pm$15                               \\
                         & \multicolumn{1}{c|}{Bi-\RRT*}    & 102$\pm$15                                        & 44$\pm$12                                   & 47$\pm$10                               \\
                         & \multicolumn{1}{c|}{LM-\RRT*}    & 420$\pm$67                                       & 190$\pm$50                               & 220$\pm$45                             \\
                         & \multicolumn{1}{c|}{Informed-\RRT*} & 110$\pm$13                           & 29$\pm$9.2                               & 73$\pm$8.5                             \\
                         & \multicolumn{1}{c|}{\PRM*}       & \textbf{19$\pm$0.34}                              & N/A                                         & \textbf{9.1$\pm$0.34}                   \\ \bottomrule
\end{tabular}
\end{table}

In this section, we evaluate the performance of our method in simulated environments with various degrees of complexity: a standard room floor-plan
(\cref{fig:map:room}), 
maze
(\cref{fig:map:maze}), and clutter (\cref{fig:map:clutter});
listed in decreasing order of their $(\alpha,\beta,\epsilon)$-\emph{expansiveness} \autocite{hsu1997_PathPlan}.
Experiments were performed with 20 different pairs of start and goal locations, each repeated 20 times. Planning was performed with a fixed budget of nodes, and planners could run until the budget is exhausted.
The pairs of locations with the highest cost were plotted in \cref{fig:alg-comparasion}.
\RRdT*, 
\RRT* \autocite{karaman2010_IncrSamp},
Bi-\RRT* \autocite{kuffner2000_RRTcEffi}, 
Informed \RRT* \autocite{gammell2018_InfoSamp},
Learning-based Multi-RRTs (LM-RRT) \autocite{wang2018_LearMult},
and \PRM* \autocite{karaman2011_SampAlgo} 
were implemented under the same planning framework\footnote{Code and map resources available at https://github.com/soraxas/RRdT}
in Python.
Note that since \PRM* is not an anytime algorithm \autocite{karaman2011_AnytMoti}, a solution is only available after the entire budget is exhausted.
Therefore, for comparing \RRdT* against an equivalent multi-query planner, the result of \PRM* presented here reconstructs its entire graph every 250 iterations (and discarded afterwards).
Hence, the time-to-solution from \PRM* should be regarded as the slowest, as it is only available after the full budget.

We used the number of sampled points as a metric to measure sampling efficiency.
\Cref{table:sampled-pts-results} shows results of sampled points by the various SBPs.
Because \PRM* does not attempt to create connections during sampling, the \textit{fail to connect} measurements is not applicable.
All planers behave similarly in the simple \emph{Room} environment.
In more complex environments like the \emph{Maze} and \emph{Clutter}, \RRdT* has the least amount of failed connections compared to other incremental SBPs.
In fact, sampling efficiency of \RRT* and Bi-\RRT* in \Cref{table:sampled-pts-results} are identical.
However, as depicted in \cref{fig:alg-comparasion}, Bi-\RRT* outperforms \RRT* in time-to-solution, which is indicative of the benefits of growing multiple trees from different roots.
Indeed, \RRdT* significantly outperforms \RRT* and Bi-\RRT* in finding an initial solution in environments with limited visibility.
LM-\RRT* is also benefited from using multiple trees.
It was able to achieve a fast solution in \emph{Room}, but obtained inferior results in \emph{Maze} and \emph{Clutter} due to no apparent narrow passages.
The planner suffered from choosing a suitable tree to add the sampled points because the placement of its trees is hindered by environments without well-defined narrow passages.
On the other hand, d-trees in \RRdT* exploits local spaces with the ability to spawn new roots when the current one appears to be stuck.
It retains the previous d-tree for re-connection later, which are valuable information that other incremental SBPs throw away during failed tree expansion.

\Cref{fig:map:maze} highlights the different behaviour between different incremental SBPs.
Both \RRT* and Bi-\RRT* are restricted by surrounding obstacles which limit visibility.
Whereas \RRdT* has a faster time-to-solution (\cref{fig:alg-comparasion}) since it explores spaces incrementally and evenly.
Informed \RRT* has an improved convergence time and can reduce the failed connections after finding a solution.
However, it does not improve the time it takes to find an initial solution.
Informed \RRT* has the same time-to-solution as \RRT*, as the dynamic sampling only happens after an initial solution is found.
On the other hand, \RRdT* fuses the advantageous properties of \RRT* and \PRM*.
\RRdT* maintains similar sample counts as \PRM*, but being an incremental SBP, it can return an initial solution faster.

\section{Conclusion}\label{sec:conclusion}

We presented \RRdT*, an incremental multi-query SBP that actively balances global exploration and local-connectivity exploitation, formulated as an MAB problem.
By exploiting local connections, \RRdT* maintains sample efficiency
which produce robust performance in highly complex spaces.

Limited visibility, especially within narrow passages, has a restricting effect on tree expansions in incremental planners.
\RRdT* mitigates it by exploring \emph{C-space} with multiple exploring disjointed-trees, which on their own exploits local-connectivity.
High visibility of \RRdT* saves computational resources, allowing it to exploits local structures and behave consistently even in a highly constrained environment.
Active exploration combined with MCMC Random Walk exploitation is a novel approach that brings sampling efficiency to incremental SBPs, whilst keeping theoretical guarantees.

We believe that \RRdT* adaptively balances the exploration-exploitation trade off during planning, making it robust to the complexity of \emph{C-space}.
This model can be improved by a dynamic sampling procedure instead of a uniform distribution.
Exploring with multiple trees in \RRdT* can also be easily implemented using parallel programming.

\addtolength{\textheight}{-8.1cm}   


\clearpage

\printbibliography

\end{document}